\newtheorem{theorem}{Theorem}
\newtheorem{lemma}{Lemma}
\theoremstyle{definition}
\theoremstyle{remark}
\definecolor{codegreen}{rgb}{0,0.6,0}
\definecolor{codegray}{rgb}{0.5,0.5,0.5}
\definecolor{codepurple}{rgb}{0.58,0,0.82}
\definecolor{backcolour}{rgb}{0.95,0.95,0.92}
\lstdefinestyle{mystyle}{
    backgroundcolor=\color{backcolour},   
    commentstyle=\color{codegreen},
    keywordstyle=\color{magenta},
    numberstyle=\tiny\color{codegray},
    stringstyle=\color{codepurple},
    basicstyle=\ttfamily\footnotesize,
    breakatwhitespace=false,         
    breaklines=true,                 
    captionpos=b,                    
    keepspaces=true,                 
    numbers=left,                    
    numbersep=5pt,                  
    showspaces=false,                
    showstringspaces=false,
    showtabs=false,                  
    tabsize=2
}
\title{\textbf{The Geometry of the Pivot:} \\ A Note on Lazy Pivoted Cholesky and Farthest Point Sampling}
\author{
    \textbf{Gil Shabat} \\
    \texttt{gil.shabat@cs.tau.ac.il}
}
\date{\today}
\begin{document}

\maketitle

\begin{abstract}
Low-rank approximations of large kernel matrices are ubiquitous in machine learning, particularly for scaling Gaussian Processes to massive datasets. The Pivoted Cholesky decomposition is a standard tool for this task, offering a computationally efficient, greedy low-rank approximation. While its algebraic properties are well-documented in numerical linear algebra, its geometric intuition within the context of kernel methods often remains obscure. In this note, we elucidate the geometric interpretation of the algorithm within the Reproducing Kernel Hilbert Space (RKHS). We demonstrate that the pivotal selection step is mathematically equivalent to \textit{Farthest Point Sampling} (FPS) using the kernel metric, and that the Cholesky factor construction is an implicit Gram-Schmidt orthogonalization. We provide a concise derivation and a minimalist Python implementation to bridge the gap between theory and practice.
\end{abstract}

\section{Introduction}

Kernel methods, such as Gaussian Processes (GPs), provide a flexible framework for non-parametric regression and classification. However, they typically scale poorly with dataset size $N$, requiring $\mathcal{O}(N^3)$ operations for inversion and $\mathcal{O}(N^2)$ for storage of the kernel matrix $K$. To make these methods tractable for large $N$, low-rank approximations $K \approx L L^\top$, where $L \in \mathbb{R}^{N \times M}$ and $M \ll N$, are widely employed \cite{rasmussen2006gaussian}.

Beyond direct approximation for inference, these decompositions play a crucial role as \textbf{preconditioners} for iterative linear solvers. When solving systems $Kx=y$ using methods like Conjugate Gradients (CG), the convergence rate depends on the condition number of $K$. A low-rank Pivoted Cholesky factor (or the related Interpolative Decomposition) is frequently used to construct effective preconditioners that cluster the eigenvalues of the system, enabling rapid convergence even for massive datasets with millions of points \cite{gardner2018gpytorch, cutajar2016preconditioning, shabat2021fast}.

Among various approximation techniques---such as the Nyström method \cite{williams2001using} and Random Fourier Features \cite{rahimi2007random}---the \textbf{Pivoted Cholesky Decomposition} stands out for being parameter-free, deterministic, and amenable to "lazy" evaluation. The lazy variant computes matrix entries only on demand, drastically reducing computational complexity from $\mathcal{O}(N^3)$ to $\mathcal{O}(NM^2)$ and memory complexity from $\mathcal{O}(N^2)$ to $\mathcal{O}(NM)$.

Despite its adoption in modern libraries like GPyTorch \cite{gardner2018gpytorch}, the algorithm is often taught as a purely algebraic manipulation of matrix entries to minimize the trace norm of the error. While the standard Pivoted Cholesky is a classic textbook algorithm \cite{golub2013matrix}, its "lazy" adaptation for kernel matrices---and specifically the explicit connection to geometric sampling---is often treated as folklore in the machine learning community, lacking a definitive primary reference. This note aims to fill this gap. We emphasize that we do not propose a new algorithm; rather, we formalize the geometric intuition commonly shared among practitioners but rarely documented: the algorithm is essentially a greedy volume maximization strategy that performs \textit{Farthest Point Sampling} in the feature space.

\section{The Algorithm: Lazy Pivoted Cholesky}

The goal is to construct a low-rank approximation that minimizes the reconstruction error. A natural objective is to minimize the \textbf{trace} of the residual matrix $E_m = K - L_m L_m^\top$, which measures the total unexplained variance. Ideally, at each step, one would select the pivot that maximizes the reduction in the total trace. This reduction corresponds to the squared Euclidean norm of the new column, $\sum_i L_{i,m}^2$.

However, computing the full column norm for every candidate pivot would require $\mathcal{O}(N^2)$ operations per step, defeating the efficiency of the lazy evaluation. Instead, the algorithm employs a greedy heuristic: it selects the pivot $i^*$ that maximizes the diagonal entry $d[i^*]$.

\begin{equation}
    \text{trace}(E_m) = \sum_{i=1}^N d[i]
\end{equation}

Thus, the algorithm greedily eliminates the largest diagonal error. While not strictly maximizing the total trace reduction (which would require computing all columns), the maximum diagonal entry serves as a tight lower bound for the column energy and acts as an efficient proxy for trace minimization without instantiating the full matrix columns.

\begin{algorithm}
\caption{Lazy Pivoted Cholesky Decomposition}
\label{alg:lazy_cholesky}
\begin{algorithmic}[1]
\State \textbf{Input:} Kernel function $k(\cdot, \cdot)$, Data $X$, Rank $M$
\State Initialize diagonal residuals: $d_i = k(x_i, x_i)$ for all $i=1 \dots N$
\State Initialize factor: $L = \mathbf{0}_{N \times M}$
\State Initialize permutation: $\pi = [1, \dots, N]$
\For{$m = 1$ to $M$}
    \State \textcolor{blue}{\textbf{// 1. Selection Step (Pivoting)}}
    \State \textcolor{gray}{\textit{// Select pivot with maximal residual variance}}
    \State $i^* \gets \arg\max_{j \in \{m, \dots, N\}} d[j]$ 
    \State Swap indices $m$ and $i^*$ in $\pi, d$, and rows of $L$
    
    \State \textcolor{blue}{\textbf{// 2. Basis Construction}}
    \State $L[m, m] \gets \sqrt{d[m]}$
    
    \State \textcolor{blue}{\textbf{// 3. Lazy Column Evaluation}}
    \State Compute raw column: $\mathbf{c} \gets k(X[\pi_{m+1:}], X[\pi_m])$
    \State Update column (Schur complement):
    \State $L[m+1:, m] \gets \frac{1}{L[m,m]} \left( \mathbf{c} - L[m+1:, 1:m-1] L[m, 1:m-1]^\top \right)$
    
    \State \textcolor{blue}{\textbf{// 4. Residual Update}}
    \State $d[m+1:] \gets d[m+1:] - L[m+1:, m]^2$
\EndFor
\State \textbf{Return} $L, \pi$
\end{algorithmic}
\end{algorithm}

\section{Geometric Interpretation in RKHS}

Let $\mathcal{H}$ be the Reproducing Kernel Hilbert Space (RKHS) associated with kernel $k$, and let $\phi: \mathcal{X} \to \mathcal{H}$ be the feature map such that $k(x, y) = \langle \phi(x), \phi(y) \rangle_\mathcal{H}$.

\subsection{The Implicit Basis and QR Connection}
At step $m$, let $S_{m-1}$ be the subspace spanned by the feature vectors of the previously selected pivots:
\begin{equation}
    S_{m-1} = \text{span}\left\{ \phi(x_{\pi_1}), \dots, \phi(x_{\pi_{m-1}}) \right\} \subset \mathcal{H}.
\end{equation}
For the base case $m=1$, we define $S_0 = \{\mathbf{0}\}$, representing the trivial subspace.
The Cholesky algorithm implicitly constructs an orthonormal basis $\{e_1, \dots, e_M\}$ for this sequence of subspaces. This is equivalent to applying the Gram-Schmidt process to the ordered sequence of selected feature vectors.

To see this connection formally, let $\Phi$ denote the (potentially infinite) matrix of feature vectors. The kernel matrix is $K = \Phi \Phi^\top$. Performing a QR decomposition (Gram-Schmidt) on the transpose of the feature matrix yields $\Phi^\top = Q R$, where $Q$ has orthonormal columns (the basis vectors) and $R$ is upper triangular. Substituting this into the kernel definition:
\begin{equation}
    K = \Phi \Phi^\top = (Q R)^\top (Q R) = R^\top Q^\top Q R = R^\top R.
\end{equation}
Since $Q$ is orthonormal ($Q^\top Q = I$). The Cholesky decomposition provides $K = L L^\top$. Due to the uniqueness of the Cholesky factor, we identify $L = R^\top$.
Thus, the entries $L_{i,j}$ represent the coefficients of the feature vectors when projected onto this implicit orthonormal basis:
\begin{equation}
    L_{i,j} = \langle \phi(x_i), e_j \rangle_\mathcal{H}.
\end{equation}

\subsection{The Update Step as Orthogonal Projection}
To see why Equation (5) holds, consider the algebraic update step for the $m$-th column. The entry $L_{i,m}$ is computed as:
\begin{equation}
    L_{i,m} = \frac{1}{L_{mm}} \left( k(x_i, x_{\pi_m}) - \sum_{j=1}^{m-1} L_{i,j} L_{m, j} \right).
\end{equation}
The summation term represents the inner product of the projections onto the existing basis vectors $e_1, \dots, e_{m-1}$. Note that $L_{m,j}$ denotes the $j$-th coefficient of the pivot (which resides in the $m$-th row of $L$ after permutation). The expression inside the parentheses is therefore the inner product between $\phi(x_i)$ and the \textit{residual vector} of the pivot, $r_m = \phi(x_{\pi_m}) - \text{Proj}_{S_{m-1}}(\phi(x_{\pi_m}))$.

Note that by definition of orthogonal projection, the residual vector $r_m$ is orthogonal to the subspace $S_{m-1}$.
Since the diagonal element serves as the normalization factor $L_{mm} = \sqrt{d[\pi_m]} = \|r_m\|_\mathcal{H}$, the update step explicitly computes the projection onto the new normalized basis vector $e_m = r_m / \|r_m\|_\mathcal{H}$:
\begin{equation}
    L_{i,m} = \frac{\langle \phi(x_i), r_m \rangle_\mathcal{H}}{\|r_m\|_\mathcal{H}} = \langle \phi(x_i), e_m \rangle_\mathcal{H}.
\end{equation}

\subsection{The Residual as Geometric Distance}
The vector $d$ maintains the diagonal of the residual matrix $K - L_m L_m^\top$. Algebraically, the update rule is $d[i] \leftarrow d[i] - L_{i,m}^2$. Geometrically, this is an application of the Pythagorean theorem in $\mathcal{H}$.

The squared norm of any feature vector can be decomposed into the energy contained in the subspace $S_{m-1}$ (the projection) and the energy of the residual vector (the error):
\begin{equation}
    \|\phi(x_i)\|_\mathcal{H}^2 = \| \text{Proj}_{S_{m-1}}(\phi(x_i)) \|_\mathcal{H}^2 + \| \phi(x_i) - \text{Proj}_{S_{m-1}}(\phi(x_i)) \|_\mathcal{H}^2.
\end{equation}
The second term is precisely the squared distance of the point $\phi(x_i)$ from the subspace $S_{m-1}$. Since $\| \text{Proj}_{S_{m-1}}(\phi(x_i)) \|_\mathcal{H}^2 = \sum_{j=1}^{m-1} \langle \phi(x_i), e_j \rangle_\mathcal{H}^2 = \sum_{j=1}^{m-1} L_{i,j}^2$, the value stored in $d[i]$ at step $m$ corresponds to this residual energy:
\begin{equation}
    d[i] = \|\phi(x_i)\|_\mathcal{H}^2 - \sum_{j=1}^{m-1} L_{i,j}^2 = \| \phi(x_i) - \text{Proj}_{S_{m-1}}(\phi(x_i)) \|_\mathcal{H}^2.
\end{equation}

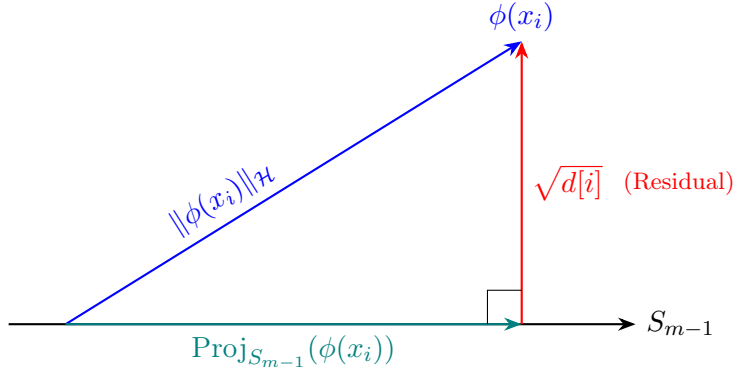
\begin{figure}[ht]
    \centering
    \begin{tikzpicture}[scale=1.5, >=Stealth]
        \coordinate (O) at (0,0);
        \coordinate (P) at (4,0); 
        \coordinate (X) at (4,2.5); 
        
        \draw[thick, ->] (-0.5,0) -- (5,0) node[right] {$S_{m-1}$};
        
        \draw[thick, ->, blue] (O) -- (X) node[midway, above left, sloped] {$\|\phi(x_i)\|_\mathcal{H}$};
        \node[blue, above] at (X) {$\phi(x_i)$};
        
        \draw[thick, ->, teal] (O) -- (P) node[midway, below] {$\text{Proj}_{S_{m-1}}(\phi(x_i))$};
        
        \draw[thick, ->, red] (P) -- (X);
        \node[red, right] at (4, 1.25) {$\sqrt{d[i]}$};
        \node[red, right, font=\footnotesize] at (4.8, 1.25) {(Residual)};
        
        \draw (3.7,0) -- (3.7,0.3) -- (4,0.3);
    \end{tikzpicture}
    \caption{Geometric interpretation of the residual update. The squared norm of the feature vector $\phi(x_i)$ is decomposed via the Pythagorean theorem into the squared norm of its projection onto the current subspace $S_{m-1}$ and the squared norm of the orthogonal residual. The Pivoted Cholesky algorithm greedily selects the point $x_i$ that maximizes this residual distance (height).}
    \label{fig:geometry}
\end{figure}

\subsection{Pivoting is Farthest Point Sampling}
The pivoting step selects the index $i^*$ that maximizes the diagonal entry:
\begin{equation}
    i^* = \arg\max_{i} d[i] = \arg\max_{i} \| \phi(x_i) - \text{Proj}_{S_{m-1}}(\phi(x_i)) \|_\mathcal{H}^2.
\end{equation}
\textbf{Conclusion:} The Pivoted Cholesky algorithm is mathematically equivalent to Greedy Farthest Point Sampling (FPS) \cite{gonzalez1985clustering} in the RKHS. In each iteration, it selects the data point that is "least explained" by the current approximation, i.e., the point farthest from the linear span of the previously selected points.

\subsection{Formal Equivalence}
We formalize the preceding derivation with the following lemma and theorem.

\begin{lemma}[Residual Identity]
At step $m$, the diagonal element $d[i]$ maintained by Algorithm \ref{alg:lazy_cholesky} satisfies:
\begin{equation}
    d[i] = \|\phi(x_i) - \text{Proj}_{S_{m-1}}(\phi(x_i))\|_\mathcal{H}^2,
\end{equation}
where $S_{m-1} = \text{span}\{\phi(x_{\pi_1}), \dots, \phi(x_{\pi_{m-1}})\}$ and $S_0 = \{\mathbf{0}\}$.
\end{lemma}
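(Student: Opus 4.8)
The plan is to prove the identity by induction on the step index $m$, taken over the \emph{selection order} of the pivots rather than the raw data indices, so that the row swaps in Algorithm~\ref{alg:lazy_cholesky} are pure bookkeeping and never enter the argument. Because the residual update $d[i]\leftarrow d[i]-L_{i,m}^2$ is driven by the entries $L_{i,m}$, whose geometric meaning (the identity $L_{i,j}=\langle\phi(x_i),e_j\rangle_\mathcal{H}$) itself presupposes that the earlier columns already store inner products against an orthonormal basis, I would not try to prove the $d$-identity in isolation. Instead I would strengthen it into a \emph{joint} invariant maintained on entry to step $m$: (i) $e_1,\dots,e_{m-1}$ are orthonormal and span $S_{m-1}$; (ii) $L_{i,j}=\langle\phi(x_i),e_j\rangle_\mathcal{H}$ for every $j<m$ and every $i$; and (iii) $d[i]=\|\phi(x_i)-\text{Proj}_{S_{m-1}}(\phi(x_i))\|_\mathcal{H}^2$. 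The lemma is exactly clause (iii).

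For the base case $m=1$ we have $S_0=\{\mathbf{0}\}$, so the projection vanishes and clauses (i) and (ii) are vacuous, while (iii) reduces to $d[i]=\|\phi(x_i)\|_\mathcal{H}^2=k(x_i,x_i)$, which is precisely the diagonal initialization of the algorithm.

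For the inductive step, assume the invariant holds on entering step $m$. I would first settle clause (ii) for the new column: set $r_m=\phi(x_{\pi_m})-\text{Proj}_{S_{m-1}}(\phi(x_{\pi_m}))$ and $e_m=r_m/\|r_m\|_\mathcal{H}$, so that by construction $e_m\perp S_{m-1}$ and $\|e_m\|_\mathcal{H}=1$, which promotes (i) to step $m+1$. Invoking clause (ii) of the hypothesis, the sum $\sum_{j<m}L_{i,j}L_{m,j}$ appearing in the update formula for $L_{i,m}$ collapses to $\langle\phi(x_i),\text{Proj}_{S_{m-1}}(\phi(x_{\pi_m}))\rangle_\mathcal{H}$, so the parenthesized quantity is $\langle\phi(x_i),r_m\rangle_\mathcal{H}$; dividing by $L_{mm}=\sqrt{d[\pi_m]}=\|r_m\|_\mathcal{H}$ (using clause (iii) at the pivot) yields $L_{i,m}=\langle\phi(x_i),e_m\rangle_\mathcal{H}$, which is (ii) at step $m$. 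Finally, since $e_m$ is orthogonal to $S_{m-1}$, the projection onto $S_m=S_{m-1}\oplus\text{span}\{e_m\}$ splits as $\text{Proj}_{S_m}(\phi(x_i))=\text{Proj}_{S_{m-1}}(\phi(x_i))+\langle\phi(x_i),e_m\rangle_\mathcal{H}\,e_m$, and Pythagoras gives
\begin{equation}
\|\phi(x_i)-\text{Proj}_{S_m}(\phi(x_i))\|_\mathcal{H}^2=\|\phi(x_i)-\text{Proj}_{S_{m-1}}(\phi(x_i))\|_\mathcal{H}^2-\langle\phi(x_i),e_m\rangle_\mathcal{H}^2.
\end{equation}
By the hypothesis the first term on the right is the old $d[i]$ and the second is $L_{i,m}^2$, so the right-hand side equals the algorithm's updated value $d[i]-L_{i,m}^2$; this is clause (iii) at step $m+1$, closing the induction.

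The main difficulty here is organizational rather than computational: the three clauses are mutually dependent---(ii) at step $m$ needs (i) and (ii) from step $m-1$, and the normalization $L_{mm}=\sqrt{d[\pi_m]}$ feeds (iii) back into the next column---so they must be carried as a single package, which is why I would resist proving the $d$-identity on its own. The one genuine analytic subtlety is well-definedness of $e_m$: the construction requires $\|r_m\|_\mathcal{H}=\sqrt{d[\pi_m]}>0$, i.e.\ that the chosen pivot's feature vector is not already contained in $S_{m-1}$. This holds whenever the maximal diagonal is strictly positive; should it vanish, the residual matrix $K-L_{m-1}L_{m-1}^\top$ is already zero, the factorization is exact, and the identity holds trivially for all remaining indices, so the iteration may simply terminate.
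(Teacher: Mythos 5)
Your proof is correct and follows the same induction-plus-Pythagoras argument as the paper; the only real difference is that you fold the identity $L_{i,j}=\langle\phi(x_i),e_j\rangle_\mathcal{H}$ into the induction as part of a joint invariant, whereas the paper invokes it as its standalone Equation (7). That packaging is a genuine improvement in rigor---Equation (7) itself depends inductively on the orthonormality of the earlier basis vectors and on the $d$-identity at the pivot (via $L_{mm}=\sqrt{d[\pi_m]}$), so carrying the three clauses together, and flagging the $\|r_m\|_\mathcal{H}>0$ requirement for $e_m$ to be well defined, closes gaps the paper's proof leaves implicit.
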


\begin{proof}
We proceed by induction. For $m=1$, the subspace $S_0$ is the zero vector. Thus, $d[i] = k(x_i, x_i) = \langle \phi(x_i), \phi(x_i) \rangle_\mathcal{H} = \|\phi(x_i)\|_\mathcal{H}^2$, which is the squared distance to the zero vector.
Assume the hypothesis holds for step $m$. The update rule for the diagonal is $d^{(new)}[i] = d^{(old)}[i] - L_{i,m}^2$.
Using the inductive hypothesis and Equation (7), we have:
\begin{equation}
    d^{(new)}[i] = \|\phi(x_i) - \text{Proj}_{S_{m-1}}(\phi(x_i))\|_\mathcal{H}^2 - |\langle \phi(x_i), e_m \rangle_\mathcal{H}|^2.
\end{equation}
Since $e_m$ is orthogonal to $S_{m-1}$ and $\|e_m\|_\mathcal{H}=1$, removing the component projected onto $e_m$ corresponds to computing the squared distance to the larger subspace $S_m$, which is spanned by adding $e_m$ to $S_{m-1}$ (formally, $S_m = S_{m-1} \oplus \text{span}\{e_m\}$).
\end{proof}

\begin{theorem}[Cholesky is FPS]
The sequence of pivots $\{p_1, \dots, p_M\}$ selected by the Lazy Pivoted Cholesky decomposition is identical to the sequence selected by the greedy Farthest Point Sampling algorithm on the feature vectors in $\mathcal{H}$ (assuming consistent tie-breaking).
\end{theorem}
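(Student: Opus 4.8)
The plan is to prove the statement by induction on the step index $m$, with the Residual Identity Lemma serving as the central tool. Before starting the induction I would fix the precise notion of Farthest Point Sampling in force here: the ``distance'' from a candidate $\phi(x_i)$ to the current sample is taken to be the distance to the linear span $S_{m-1}$ of the already-selected feature vectors, i.e.\ $\mathrm{dist}(\phi(x_i), S_{m-1}) = \|\phi(x_i) - \mathrm{Proj}_{S_{m-1}}(\phi(x_i))\|_{\mathcal{H}}$. Under this convention, both algorithms select at each step the index maximizing a common geometric quantity, and the proof reduces to showing that the two maximizations pick the same index. I would also record, as a trivial preliminary, that the permutation bookkeeping in Algorithm \ref{alg:lazy_cholesky} (the swaps applied to $\pi$, $d$, and the rows of $L$) reorders but does not alter the \emph{set} of chosen indices, so the span $S_{m-1}$ is permutation-invariant and well-defined.

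For the base case $m=1$, both methods start from $S_0 = \{\mathbf{0}\}$: FPS selects $\arg\max_i \|\phi(x_i)\|_{\mathcal{H}}$, while Cholesky selects $\arg\max_i d[i] = \arg\max_i k(x_i,x_i) = \arg\max_i \|\phi(x_i)\|_{\mathcal{H}}^2$. Since $t \mapsto t^2$ is strictly increasing on $[0,\infty)$, these argmax sets coincide, and consistent tie-breaking forces the same index $p_1$. For the inductive step I would assume the two algorithms have produced the identical sequence $p_1,\dots,p_{m-1}$; then $S_{m-1} = \mathrm{span}\{\phi(x_{p_1}),\dots,\phi(x_{p_{m-1}})\}$ is determined entirely by this common sequence and hence agrees for both. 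By the Residual Identity Lemma, the Cholesky diagonal satisfies $d[i] = \mathrm{dist}(\phi(x_i), S_{m-1})^2$, so Cholesky maximizes $\mathrm{dist}(\phi(x_i), S_{m-1})^2$ whereas FPS maximizes $\mathrm{dist}(\phi(x_i), S_{m-1})$. Applying monotonicity of squaring once more, the maximizers coincide, and the shared tie-breaking rule yields the same $p_m$. This closes the induction and shows the full pivot sequences are identical.

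The step I expect to require the most care is not the algebra but the definitional reconciliation just described. Textbook FPS measures the distance of a candidate to the nearest \emph{already-chosen point}, whereas the quantity governing the Cholesky pivot is the distance to the \emph{linear span} of the chosen points; these two notions differ in general (for instance, the distance to the line spanned by a single selected vector is not the distance to that vector). The theorem is therefore true precisely for the subspace-distance variant of FPS — equivalently, the sampling is ``farthest'' in the Gram--Schmidt / orthogonal-residual sense rather than the metric nearest-neighbor sense — and I would state this explicitly so the equivalence is not misread. Once this convention is pinned down, the remainder is genuinely a corollary of the Lemma: everything hinges on the Lemma's identification of $d[i]$ with the squared point-to-subspace distance, together with the elementary observation that squaring preserves argmax over nonnegative values.
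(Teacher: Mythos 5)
Your proposal is correct and follows essentially the same route as the paper: both reduce the theorem to the Residual Identity Lemma plus the monotonicity of squaring on nonnegative reals, with the paper leaving the induction (that agreement of the first $m-1$ pivots forces agreement of $S_{m-1}$) implicit where you spell it out. Your definitional caveat about subspace-distance versus nearest-neighbor FPS is well placed but not a departure --- the paper adopts the subspace convention by fiat in the first line of its proof and defers that distinction to its Discussion section.
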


\begin{proof}
The greedy FPS algorithm selects the point $p_m$ that maximizes the distance to the current subspace: $p_m = \arg\max_i \text{dist}(\phi(x_i), S_{m-1})$.
According to the Lemma, the diagonal entry $d[i]$ computed by the Cholesky algorithm is exactly the square of this distance: $d[i] = \text{dist}^2(\phi(x_i), S_{m-1})$.
Since the distance function is non-negative, maximizing the distance is equivalent to maximizing the squared distance. Thus:
\begin{equation}
    \arg\max_i d[i] = \arg\max_i \|\phi(x_i) - \text{Proj}_{S_{m-1}}(\phi(x_i))\|_\mathcal{H}^2.
\end{equation}
Therefore, both algorithms maximize the same objective function at every step, resulting in an identical sequence of selected points.
\end{proof}

\section{Implementation}

The following Python code implements the algorithm, highlighting the "lazy" nature where only necessary columns of the kernel matrix are computed.

\begin{lstlisting}[language=Python, caption=Python implementation of Lazy Pivoted Cholesky]
import numpy as np

def lazy_pivoted_cholesky(kernel_func, X, max_rank, tol=1e-6):
    """
    Computes the partial pivoted Cholesky decomposition.
    
    Args:
        kernel_func: Function (X1, X2, diag=False) -> Matrix or Vector.
                     Must support efficient diagonal computation (O(N)).
        X: Data matrix (N x D).
        max_rank: Maximum rank of the approximation.
        
    Returns:
        L (N x M), pivots (M)
    """
    N = X.shape[0]
    # d represents the squared distance from the current subspace
    d = kernel_func(X, X, diag=True) 
    L = np.zeros((N, max_rank))
    pivots = np.arange(N)
    
    for k in range(max_rank):
        # 1. FPS Step: Find point farthest from current subspace
        i_star = k + np.argmax(d[k:])
        
        # Swap logic
        pivots[[k, i_star]] = pivots[[i_star, k]]
        d[[k, i_star]] = d[[i_star, k]]
        L[[k, i_star], :] = L[[i_star, k]]
        
        # Check convergence (max residual distance < tolerance)
        if d[k] < tol:
            return L[:, :k], pivots[:k]
            
        # 2. Basis construction (Normalization)
        L[k, k] = np.sqrt(d[k])
        
        # 3. Compute column k (Lazy evaluation)
        # We need <phi(x), phi(pivot)> for all remaining x
        col_raw = kernel_func(X[pivots[k+1:]], 
                              X[pivots[k]].reshape(1, -1)).flatten()
        
        # Implicit Gram-Schmidt: Subtract projection on previous basis vectors
        dot_prod = L[k+1:, :k] @ L[k, :k].T
        L[k+1:, k] = (col_raw - dot_prod) / L[k, k]
        
        # 4. Update distances (Pythagoras theorem)
        # dist_new^2 = dist_old^2 - projection_on_new_axis^2
        d[k+1:] -= L[k+1:, k]**2
        
        # Numerical stability: clamp negative values due to float precision
        d[k+1:] = np.maximum(d[k+1:], 0.0) 

    return L, pivots
\end{lstlisting}

\section{Discussion}
Understanding Pivoted Cholesky as a geometric process clarifies its strengths and weaknesses compared to other low-rank methods. 

\begin{itemize}
    \item \textbf{Efficiency:} Unlike Nyström methods that may require a separate clustering step (e.g., K-Means) to select landmarks, Cholesky performs point selection and matrix decomposition simultaneously in a single pass, requiring $\mathcal{O}(NM^2)$ operations \cite{gardner2018gpytorch}.
    
    \item \textbf{Point-wise vs. Subspace FPS:} It is crucial to distinguish between two common interpretations of "Farthest Point Sampling." Standard geometric FPS typically selects the point maximizing the distance to the \textit{nearest neighbor} in the selected set ($x_{next} = \arg\max_{x} \min_{s \in S} \|x - s\|$). Even if implemented in the feature space using the kernel metric ($d(x,s) = \|\phi(x) - \phi(s)\|_\mathcal{H}$), this approach fundamentally places spheres around selected points. In contrast, Pivoted Cholesky acts as a \textit{Subspace FPS} in the RKHS, selecting the point maximizing the orthogonal distance to the \textit{linear subspace} spanned by the selected feature vectors ($x_{next} = \arg\max_{x} \|\phi(x) - \text{Proj}_{\text{span}(S)}(\phi(x))\|_{\mathcal{H}}$). This distinction is critical: a point can be far from any specific selected point (high pairwise distance) yet linearly dependent on their combination (zero Cholesky residual). Thus, Cholesky maximizes the "volume" spanned by the features rather than merely filling the space. Note, however, that for stationary kernels with very narrow bandwidths (fast decay), the feature vectors become nearly orthogonal. In this specific regime, the distance to the subspace is dominated by the distance to the nearest neighbor, and the two sampling strategies effectively coincide.

    \item \textbf{Volume Maximization and RRQR:} Geometrically, maximizing the distance to the subspace is equivalent to greedily maximizing the volume of the simplex formed by the selected feature vectors (or the determinant of the kernel submatrix). This reveals a strong connection to \textit{Rank-Revealing QR} (RRQR) algorithms. While Pivoted Cholesky implements a greedy strategy (often termed "weak RRQR"), \textit{Strong RRQR} algorithms \cite{gu1996efficient} employ additional column swapping strategies to guarantee tighter error bounds. Strong RRQR thus occupies a theoretical middle ground between the efficient greedy Cholesky and the statistically optimal but expensive Leverage Score sampling.
    
    \item \textbf{Optimality vs. Leverage Scores:} While FPS is efficient and robust for space-filling, it is not statistically optimal for minimizing reconstruction error (Frobenius or spectral norm). Sampling proportional to \textit{Statistical Leverage Scores} \cite{mahoney2011randomized} provides tighter error bounds closer to the optimal SVD, as it prioritizes points that are structurally important (high "influence" on the principal components) rather than just geometrically distant outliers. However, computing exact leverage scores is expensive, making Pivoted Cholesky a pragmatic trade-off.
\end{itemize}

\section*{Acknowledgments}
This note was prepared with the assistance of an AI language model, which helped in structuring the arguments, formatting the \LaTeX{}, and refining the mathematical derivations based on the author's insights.

\end{document}